\newtheorem{theorem}{Theorem}[section]
\newtheorem{corollary}[theorem]{Corollary}
\newtheorem{proposition}[theorem]{Proposition}
\theoremstyle{definition}
\newtheorem{remark}[theorem]{Remark}
\numberwithin{equation}{section}
\begin{document}

\title[Conformally natural family]{Conformally Natural Families of Probability Distributions on Hyperbolic Disc with a View on Geometric Deep Learning}

\author{Vladimir Ja\'{c}imovi\'{c}}

\address{
Faculty of Sciences and Mathematics\endgraf
University of Montenegro\endgraf
D\v{z}ord\v{z}a Va\v{s}ingtona bb\endgraf
81000 Podgorica\endgraf
Montenegro\endgraf}
\email{vladimirj@ucg.ac.me}

\author{Marijan Markovi\'{c}}

\address{
Faculty of Sciences and Mathematics\endgraf
University of Montenegro\endgraf
D\v{z}ord\v{z}a Va\v{s}ingtona bb\endgraf
81000 Podgorica\endgraf
Montenegro\endgraf}
\email{marijanmmarkovic@gmail.com}

\subjclass[2020]{Primary 65C20; Secondary 60-08}

\keywords{hyperbolic data embedding; wrapped Cauchy; homogeneous spaces; M\"{o}bius transformations; directional statistics; geometric deep learning}

\begin{abstract}
We introduce the novel family of probability distributions on hyperbolic disc. The distinctive property of the proposed family is invariance under the actions of the group of disc-preserving
conformal mappings. The group-invariance property renders it a convenient and tractable model for encoding uncertainties in hyperbolic data. Potential applications in Geometric Deep Learning
and bioinformatics are numerous, some of them are briefly discussed. We also emphasize analogies with hyperbolic coherent states in quantum physics.
\end{abstract}

\maketitle

\section{Introduction}

Mathematically natural probability models are those that are invariant under the actions of a certain transformation group. This principle which underlies the field of integral geometry
\cite{Santalo} appears to be particularly relevant for machine learning (ML). Indeed, stochastic policies parametrized by group-invariant families have many advantages. By implementing
such policies one gains in efficiency and transparency of algorithms. 

A notable example supporting the above points is the family of Gaussian distributions. This family provides by far the most popular model in inference problems, probabilistic modeling
and stochastic search over continuous spaces. Being group-invariant, the Gaussian family has very modest representative power. Those setups in which this limitation is essential, can be
addressed by involving mixtures. The dominance of Gaussian distributions has a strong justification, as they posses a unique combination of desirable properties. One could present
information-theoretic (Gaussians are maximum entropy distributions), convex-analytic (Gaussians belong to the exponential family) and information-geometric (there is a pretty simple explicit
expression for the Fisher information) arguments in favor of this family. Still, one might argue that the most important virtue is invariance under the group of linear-affine transformations
of the vector space. The last property is (often implicitly) exploited in many algorithms where it plays a crucial role.

In conclusion, the Gaussian family justifiably holds a superior role in statistical ML over continuous spaces. The last assertion is true as long as these spaces are equipped with some flat
metric. However, many data sets coming from various fields of science, engineering and everyday life are characterized by non-zero curvatures. This means that such data are naturally embedded
into non-Euclidean spaces. It is easy to support this point by many examples. For instance, data representing rotations, or orientations in space obviously have inherent spherical geometry.
On the other side, hierarchical data (such as complex networks \cite{KPKVB}, or semantic hierarchies \cite{TBG,LW}) are naturally embedded into hyperbolic geometries. Yet another (immensely
broad) class of non-Euclidean data sets are those representing beliefs or uncertainties. The field of information geometry \cite{AJLS} sprang from the observation that the natural metric on
families of probability measures is generated by the Kullback-Leibler divergence, thus turning them into statistical (curved) manifolds.

Significance and ubiquity of non-Euclidean data has been widely recognized in data science and ML only in the last two decades. This motivated enormous research efforts, thus paving the way
for the emerging field of geometric deep learning \cite{BBCV}.      

Modeling uncertainty in non-Euclidean data requires statistical models (families of probability measures) over curved spaces. In such setups the Gaussian family becomes inadequate. The
mathematical framework for probabilistic ML over Riemannian manifolds is partially provided by directional statistics. Directional statistics is a subdiscipline within the broad field of
probability and statistics which deals with probability distributions over spheres and rotation groups \cite{MJ,PG-P}. Although it provides a solid mathematical apparatus, its significance
for ML is recognized only recently and applications are still pretty sparse. Moreover, statistical models over manifolds with negative curvature are beyond the scope of directional statistics
and are poorly investigated. This is to be changed in the near future, due to advances in ML over hyperbolic geometries \cite{GBH,NK,LLSZ}. Recent trends entail the necessity for the
corresponding statistical models in order to encode uncertainties in hyperbolic spaces.

The main goal of the present study is to introduce the family of probability distributions on hyperbolic disc which has desirable properties for probabilistic ML. The group-invariance property
facilitates implementation of algorithms and improves their transparency. For the completeness of exposition, in Section 2 we revisit the previously known family on the circle with the
group-invariance property. In Section 3 we introduce the novel family on hyperbolic disc and derive the most essential formulae. Relations with mathematical physics are exposed in Section 4.
Potential applications in ML are briefly pointed out in Section 5. Finally, Section 6 contains some concluding remarks and an outlook for the future research.

\section{The wrapped Cauchy family on the circle}

One of the most important families of probability measures on the unit circle is obtained by "wrapping" the Cauchy (Lorentzian) distributions from the real line to the circle. The "wrapping"
is realized by the Cayley transform and yields so-called {\it wrapped Cauchy distributions} which are described by the density functions \cite{McCullagh1,McCullagh2}
\begin{equation}\label{wrapped_Cauchy}
p_{wC}(\varphi) = \frac{1}{2 \pi} \frac{1-r^2}{1-2r \cos(\varphi-\Phi) + r^2}, \; \varphi \in {\mathbb S}^1.
\end{equation}
Denote this family by $wC(a)$, where $a = r e^{i \Phi}$ is a point in the unit disc. The Fisher information metric for this family coincides with the hyperbolic metric in the unit disc \cite{AG}.
Hence, the statistical manifold $wC(a)$ is isomorphic to the hyperbolic disc ${\mathbb B}^2$. 

It is easy to verify that the mean value (center of mass) of the density (\ref{wrapped_Cauchy}) is equal to $a$.

The densities (\ref{wrapped_Cauchy}) can be written in another parametrization using the complex variable $z=e^{i \varphi}$ as follows
\begin{equation*}
p_{wC}(z) = \frac{1}{2 \pi} \frac{1 - |a|^2}{|z-a|^2}.
\end{equation*}

Although $wC(a)$ is one of the central families in directional statistics, it has not been, to our best knowledge, used in ML so far. We argue that $wC(a)$ provides the most convenient statistical
model on the circle (and tori) with important favorable properties. Obviously, the family (\ref{wrapped_Cauchy}) contains the uniform density on ${\mathbb S}^1$ for $r=0$. Moreover, the Dirac
delta distributions on the circle are limit cases of (\ref{wrapped_Cauchy}) when $r \to 1$. 

In order to clarify the group-invariance property of $wC(a)$ denote by ${\mathbb G}$ the group of M\" obius transformations which leave the unit disk $\mathbb{B}^2$ invariant, i.e. the group of
conformal automorphisms of $\mathbb{B}^2$:
\begin{equation*}
\mathrm {Aut} (\mathbb {B}^2) = \left\{z\to e^{i\theta} \frac {a-z}{1- \overline{a}z}:a\in \mathbb{B}^2,\, \theta\in[0,2\pi)\right\}.
\end{equation*}

Recall that ${\mathbb G}$ is isomorphic to the matrix group $PSU(1,1) = SU(1,1)/ \pm I$.

For  $a\in \mathbb{B}^2$, denote by
\begin{equation}\label{Mobius}
g_a (z) = \frac {a-z}{1- \overline{a}z }
\end{equation}
the involutive  M\"{o}bius transformation of $\mathbb{B}^2$ (in the sense   that  $g_a\circ g_a$ is the identity transformation, and therefore $g_a^{-1} = g_a$).
Note that $g_a ( 0 ) = a$ and $g_a (a) = 0$.

\begin{proposition}
The family $wC(a)$ is invariant under the actions of the M\"{o}bius group ${\mathbb G}$. All wrapped Cauchy distributions are obtained as M\"{o}bius transformations of the uniform distribution
on the circle.
\end{proposition}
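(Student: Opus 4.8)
The plan is to recognize the density \eqref{wrapped_Cauchy}, in its complex form $p_{wC}(z)=\frac{1}{2\pi}\frac{1-|a|^2}{|z-a|^2}$, as the Poisson kernel of the unit disc, so that $wC(a)$ is precisely the harmonic measure $\omega_a$ on $\mathbb{S}^1$ seen from the interior point $a$; concretely, $d\omega_a=p_{wC}(z;a)\,|dz|$ and $\int_{\mathbb{S}^1}h\,d\omega_a=u_h(a)$, where $u_h$ denotes the harmonic extension of $h$ to $\mathbb{B}^2$. Granting this identification, the Proposition reduces to the classical fact that harmonic measure is conformally natural: $g_*\omega_a=\omega_{g(a)}$ for every $g\in\mathbb{G}$. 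I would derive this from the conformal invariance of harmonic functions in the plane. For a continuous test function $f$ on $\mathbb{S}^1$, the function $u_f\circ g$ is harmonic on $\mathbb{B}^2$ (composition of a harmonic function with the holomorphic map $g$) and, since $g$ extends to a diffeomorphism of the closed disc taking $\mathbb{S}^1$ onto $\mathbb{S}^1$, it has boundary values $f\circ g$; by uniqueness for the Dirichlet problem, $u_{f\circ g}=u_f\circ g$. Evaluating at $a$ gives $\int f\,d(g_*\omega_a)=\int(f\circ g)\,d\omega_a=u_{f\circ g}(a)=u_f(g(a))=\int f\,d\omega_{g(a)}$, which is the desired identity. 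Both assertions follow immediately: the family $\{wC(a):a\in\mathbb{B}^2\}$ is carried into itself by each $g\in\mathbb{G}$, and since $\omega_0$ is the normalized arc-length (uniform) measure while $g_a(0)=a$ exhibits transitivity of $\mathbb{G}$ on $\mathbb{B}^2$, every $wC(a)=\omega_a=\omega_{g_a(0)}=(g_a)_*\omega_0$ is a M\"obius image of the uniform distribution.

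For a reader who prefers an explicit, self-contained verification, I would alternatively perform the change of variables directly. Writing a general element of $\mathbb{G}$ as $z\mapsto e^{i\theta}g_b(z)$ with $g_b$ as in \eqref{Mobius}, rotations clearly send $wC(a)$ to $wC(e^{i\theta}a)$, so it suffices to treat $g=g_b$. Since $g_b$ is conformal it scales arc length on $\mathbb{S}^1$ by $|g_b'(z)|$, and a one-line computation gives $g_b'(z)=\frac{|b|^2-1}{(1-\bar b z)^2}$; hence the pushforward density at $w=g_b(z)$ is $p_{wC}(z;a)/|g_b'(z)|=\frac{1}{2\pi}\frac{(1-|a|^2)\,|1-\bar b z|^2}{(1-|b|^2)\,|z-a|^2}$. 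It then remains to check that this equals $\frac{1}{2\pi}\frac{1-|g_b(a)|^2}{|g_b(z)-g_b(a)|^2}$, which follows from the two standard M\"obius identities $g_b(z)-g_b(a)=\frac{(1-|b|^2)(a-z)}{(1-\bar b z)(1-\bar b a)}$ and $1-|g_b(a)|^2=\frac{(1-|a|^2)(1-|b|^2)}{|1-\bar b a|^2}$; substituting and cancelling identifies the new parameter as $a'=g_b(a)$, consistently with the first approach.

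The only genuinely delicate point in the first route is the equality $u_{f\circ g}=u_f\circ g$, i.e. that solving the Dirichlet problem commutes with pulling back by a conformal automorphism; this rests on conformal invariance of the Laplacian in two dimensions together with continuity of $g$ up to the boundary, both classical. In the second route there is no conceptual obstacle at all, only the bookkeeping of the conformal factor $|g_b'(z)|$ on the circle and a routine algebraic simplification. I would take the harmonic-measure argument as the main proof, since it also makes transparent \emph{why} the family is conformally natural and shows at once that $\mathbb{G}$ acts transitively on $\{wC(a)\}$, and record the explicit computation as a supplementary remark.
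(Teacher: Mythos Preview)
Your argument is correct, and in fact you supply considerably more than the paper does: the paper states this Proposition without proof, treating it as a known fact and passing directly to the remark that $wC(a)$ is a two-dimensional orbit of $\mathbb{G}$. So there is no ``paper's proof'' to compare against in the usual sense.

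Both of your routes are sound. The harmonic-measure argument is the cleaner one and makes the conformal naturality transparent; the only point worth flagging is that the identity $u_{f\circ g}=u_f\circ g$ genuinely uses two-dimensionality (conformal maps preserve harmonicity in the plane but not in higher dimensions), which you correctly note. The direct change-of-variables computation is entirely routine; your expressions for $g_b'(z)$ and the two M\"obius identities are correct, and the cancellation goes through as stated. Either argument would serve as a complete proof of the Proposition; the first has the advantage of explaining the phenomenon, the second of being verifiable line by line without any appeal to potential theory.
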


In other words, $wC(a)$ is the two-dimensional orbit of the three-dimensional Lie group ${\mathbb G}$ (where the dimension is understood in the sense of real numbers). Moreover, ${\mathbb G}$
acts transitively on this family, meaning that for any pair of measures $\mu_1$ and $\mu_2$ from $wC(a)$, there exists a transformation $g \in {\mathbb G}$, such that $\mu_2 = g_* \mu_1$. Here, the notation $g_*$ stands for the pull-back measure, defined by
$$
g_* \nu(A) = \nu(g^{-1}(A)) \mbox{  for any Borel set } A \subseteq {\mathbb S}^1.
$$

Due to the low dimension of statistical manifold $wC(a)$ the representative power of this model is very limited. Indeed, all densities (\ref{wrapped_Cauchy}) are unimodal and symmetric. Just as for the Gaussian family, the representative power can be improved by considering mixtures of the densities (\ref{wrapped_Cauchy}). 

\begin{remark}
The family of von Mises distributions \cite{MJ} is essentially the only circular statistical model that has been actively exploited in ML and bioinformatics so far. Notice that von Mises distributions are special cases of the von Mises-Fisher distributions over spheres. The von Mises family has some favorable information-theoretic properties (maximum entropy; they naturally appear as stationary distributions for diffusion processes on the circle). However, unlike wrapped Cauchy, this family does not have group-invariance properties. It is invariant only under the group of planar rotations.

We refer to \cite{Jacimovic} for a brief overview of probability distributions on circles, tori, spheres with an emphasis on their generation and (potential) applications in ML.
\end{remark}

\begin{remark}
Another family of probability distributions which is invariant under the M\" obius group is so-called Kato-Jones (K-J) family \cite{KJ}. This family gives rise to the $4$-dimensional statistical
manifold, which consists of those probability distributions that are obtained by actions of the M\" obius group on von Mises distributions. For fixed concentration parameter of the von Mises
distribution, one obtains orbits of the M\" obius group consisting of $3$-dimensional K-J sub-families. The K-J family is well investigated, with closed form expression for the density and
a number of useful formulae available \cite{KJ}. This family is more demanding to work with than the wrapped Cauchy, but has higher representative power. Wrapped Cauchy distributions constitute
the sub-manifold within the K-J family, obtained as a special case when the concentration parameter equals zero (meaning that the initial von Mises distribution is in fact uniform). The whole
$4$-dimensional K-J family contains asymmetric and bimodal densities, along with symmetric and unimodal ones. \end{remark}

\section{Conformally natural family of probability distributions over the hyperbolic disc}

Consider analytic functions on the  unit disk   generated by the family   $\{g_a:a\in\mathbb{B}^2\}$ of involutive  M\"{o}bius  transformations
\begin{equation*}\begin{split}
\mathcal {F}_{\alpha,a} &= \left\{ p_a(z) =  c_\alpha (1- |g_a(z)| ^2)^\alpha: a\in \mathbb{B}^2 \right\}
\\&= \left\{ c_\alpha \left(1- \left|\frac {a-z}{1-\overline{a}z}\right| ^2\right)^\alpha   \right\}
\\& = \left\{ c_\alpha \left(   \frac {(1- |a|^2) (1- |z|^2)}{ |1-\overline{a}z| ^2}  \right)^\alpha   \right\}
\\& = \left\{ c_\alpha(1- |a|^2)  ^\alpha  \left(   \frac {(1- |z|^2)}{ |1-\overline{a}z| ^2}  \right)^\alpha   \right\}.
\end{split}\end{equation*}

The point $a = |a| e^{i\varphi} \in {\mathbb B}^2$ and real number $\alpha>1$ are parameters of the family, while $c_\alpha$ are normalizing constants, chosen in such a way to make the above functions probability densities over ${\mathbb B}^2$:
\begin{equation}
\label{normalize}
\int _{\mathbb{B}^2} p_a(z) d\tau (z) =1,
\end{equation}
where $d \tau(z)$ denotes the element of area in hyperbolic metric. Recall that the area-hyperbolic density on the unit hyperbolic disc reads
\begin{equation*}
\tau (z) = \frac {1}{(1-|z|^2)^2}
\end{equation*}

The area-hyperbolic  measure of a  Lebesgue  set $U \subseteq \mathbb{B}^2$  is given by
\begin{equation*}
\mathrm {hyp-area}(U) = \int_U \tau(z) dA(z),
\end{equation*}
where  $dA(z)$ is the standard (Euclidean) Lebesgue   measure  on  $\mathbb{C} = \mathbb{R}^2$.

In order to calculate $c_\alpha$ we need to evaluate the integral  \eqref{normalize}. To that end we will need the following 

\begin{proposition} 
 For any automorphism $g$ of the unit disk onto itself, the following invariance formula holds:
\begin{equation*}
\int _ {\mathbb {B}^2} h (g (z)) \tau(z) dA(z)  = \int_{\mathbb {B}^2} h(w) \tau(w)dA(w),
\end{equation*}
where  $h$ is any Lebesgue measurable  mapping.
\end{proposition}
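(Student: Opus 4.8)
The plan is to establish this as the change-of-variables formula for the hyperbolic area measure, using the fact that the hyperbolic area form $\tau(z)\,dA(z)$ is precisely the Riemannian volume form of the Poincar\'e metric on $\mathbb{B}^2$, and that every $g\in\mathrm{Aut}(\mathbb{B}^2)$ is an isometry of this metric. First I would reduce to the case where $h$ is nonnegative (splitting into positive and negative parts, or real and imaginary parts, as needed), so that both integrals make sense in $[0,\infty]$ and Tonelli-type manipulations are legitimate. Then I would apply the ordinary (Euclidean) change of variables formula to the diffeomorphism $g\colon\mathbb{B}^2\to\mathbb{B}^2$: writing $w=g(z)$, one has
\begin{equation*}
\int_{\mathbb{B}^2} h(g(z))\,\tau(g(z))\,|\mathrm{Jac}_{\mathbb R}\, g(z)|\,dA(z) = \int_{\mathbb{B}^2} h(w)\,\tau(w)\,dA(w).
\end{equation*}
So it remains to show the pointwise identity $\tau(g(z))\,|\mathrm{Jac}_{\mathbb R}\, g(z)| = \tau(z)$ for all $z\in\mathbb{B}^2$.

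The key computation is this Jacobian identity. Since $g$ is a holomorphic automorphism, the real Jacobian determinant equals $|g'(z)|^2$. It therefore suffices to prove
\begin{equation*}
\frac{|g'(z)|^2}{(1-|g(z)|^2)^2} = \frac{1}{(1-|z|^2)^2},
\end{equation*}
i.e. the classical Schwarz--Pick identity $|g'(z)|(1-|z|^2) = 1-|g(z)|^2$. By the description of $\mathrm{Aut}(\mathbb{B}^2)$ recalled in the excerpt, any such $g$ has the form $g(z) = e^{i\theta}\,g_a(z)$ with $g_a$ as in \eqref{Mobius}; the rotation $e^{i\theta}$ changes neither $|g(z)|$ nor $|g'(z)|$, so I may take $g=g_a$. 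A direct differentiation gives $g_a'(z) = \dfrac{|a|^2-1}{(1-\overline{a}z)^2}$, hence $|g_a'(z)| = \dfrac{1-|a|^2}{|1-\overline{a}z|^2}$, while the third line of the display defining $\mathcal{F}_{\alpha,a}$ already records $1-|g_a(z)|^2 = \dfrac{(1-|a|^2)(1-|z|^2)}{|1-\overline{a}z|^2}$. Combining these two shows $1-|g_a(z)|^2 = |g_a'(z)|(1-|z|^2)$, which is exactly the needed identity.

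I do not expect a genuine obstacle here; the only things to be slightly careful about are the measure-theoretic bookkeeping (reducing to nonnegative $h$ so that the Euclidean change-of-variables theorem applies without integrability hypotheses, and noting that $g$ is a $C^\infty$ diffeomorphism of $\mathbb{B}^2$ with nonvanishing Jacobian so no boundary or singularity issues arise) and the bare-hands verification of the Jacobian identity. An alternative, essentially equivalent route is to invoke that $\tau(z)\,dA(z)$ is the volume form of the complete Riemannian metric $ds^2 = \dfrac{4\,|dz|^2}{(1-|z|^2)^2}$ of constant curvature $-1$, together with the standard fact that $\mathrm{Aut}(\mathbb{B}^2)$ acts by isometries; then invariance of the integral is immediate from invariance of the volume form. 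I would present the explicit Jacobian computation as the main argument, since it is self-contained and uses only formulae already displayed in the excerpt.
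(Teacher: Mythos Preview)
Your argument is correct and complete: the Euclidean change of variables together with the Schwarz--Pick identity $|g'(z)|(1-|z|^2)=1-|g(z)|^2$ is exactly the standard route, and your explicit computation of $g_a'$ and $1-|g_a|^2$ is accurate. Note, however, that the paper does not actually supply a proof of this proposition; it simply refers the reader to \cite[p.~2]{Pavlovic}, so there is no in-paper argument to compare yours against---your self-contained proof in fact goes beyond what the paper provides.
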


For the proof we refer to \cite[p.2]{Pavlovic}.

Using the above Proposition, we find that
\begin{equation*}\begin{split}
c_\alpha^{-1} & = \int _ {\mathbb {B}^2 } (1-|T_a (z)|^2)^\alpha \tau (z) dA(z) = \int _ {\mathbb {B}^2 } (1-|w|^2)^\alpha \tau (w) dA(w)
\\& =\int _ {\mathbb {B}^2 } (1-|w|^2)^{\alpha-2} dA(w) =  \int _ { 0 } ^1 \int _{0}^{2\pi}(1-r^2)^{\alpha-2} r dr d\theta \\&
=2\pi\int _ { 0 } ^1 (1-r^2)^{\alpha-2} r dr = \frac {\pi}{\alpha-1}.
\end{split}\end{equation*}

In conclusion, we consider the family of probability measures over ${\mathbb B}^2$ defined by the following densities
\begin{equation}
\label{conf_nat_hyp_dens}
p(z;\alpha,a) = \frac{\alpha-1}{\pi} (1- |a|^2)  ^\alpha  \left(   \frac {(1- |z|^2)}{ |1-\overline{a}z| ^2}  \right)^\alpha
\end{equation}
depending on parameters $\alpha > 1$ and $a \in {\mathbb B}^2$.

\begin{remark}
We note that
\begin{equation*}
\lim _{a\to e^{i \varphi}}c_\alpha^{-1} (1-|a|^2)^{-\alpha} p_a (z) = \mathrm {P}(|z|,\theta -\varphi),\quad  z =|z|e^{i\theta} , 
\end{equation*}
where
\begin{equation*}
\mathrm {P}(r,\Theta) =  \frac {1-r^2}{1- 2r\cos \Theta + r^2}.
\end{equation*}
\end{remark}

In the absence of better term we (following Douady and Earle \cite{DE}) refer to densities (\ref{conf_nat_hyp_dens}) as {\it conformally natural family of probability measures on the hyperbolic disc} ${\mathbb B}^2$. We will denote this family by ${\mathcal F}_{\alpha,a}$.

The family (\ref{conf_nat_hyp_dens}) is invariant under the M\" obius group. Indeed, if $\nu \in \mathcal {F}_{\alpha,a}$ and $g \in {\mathbb G}$, then obviously $g_* \nu \in \mathcal {F}_{\alpha,g(a)}$. 

The space $\mathcal {F}_{\alpha,a}$ is three-dimensional, parametrized by $\alpha>1$ and a complex number $a, \, |a|<1$. By fixing $\alpha$, we obtain two-dimensional statistical sub-manifold.

\begin{proposition}
For fixed $\alpha = \alpha_0$ the M\" obius group acts transitively on $\mathcal {F}_{\alpha_0,a}$. In other words, for any two measures $\mu_v$ and $\mu_w$ from $\mathcal{F}_{\alpha_0,a}$, there exists a M\" obius transformation $g$, such that $\mu_w = g_* \mu_v$.
\end{proposition}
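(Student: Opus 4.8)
The plan is to reduce transitivity on the family $\mathcal{F}_{\alpha_0,a}$ to the classical transitivity of the M\"obius group ${\mathbb G}$ on the disc ${\mathbb B}^2$ itself. The link between the two is the already-noted observation that pushing a member of the family forward by $g\in{\mathbb G}$ merely replaces the parameter $a$ by $g(a)$; the first task is to make this quantitative at the level of densities.

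Concretely, I would verify that $g_*\mu_v$ has density $p(\,\cdot\,;\alpha_0,g(v))$ whenever $\mu_v$ has density $p(\,\cdot\,;\alpha_0,v)$. Since the hyperbolic area element $d\tau$ is ${\mathbb G}$-invariant (this is exactly the invariance formula stated above, from \cite{Pavlovic}), a change of variables shows that the density of $g_*\mu_v$ with respect to $d\tau$ is simply $z\mapsto p(g^{-1}(z);\alpha_0,v)$, with no Jacobian correction. I would then use the elementary fact that every disc automorphism $\phi$ satisfies $|\phi(z)|=|g_b(z)|$ with $b=\phi^{-1}(0)$, because $\phi$ and $g_b$ differ only by a rotation. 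Applying this to $\phi=g_v\circ g^{-1}$, for which $\phi^{-1}=g\circ g_v$ and hence $\phi^{-1}(0)=g(g_v(0))=g(v)$, gives $|g_v(g^{-1}(z))|=|g_{g(v)}(z)|$; substituting into the defining formula \eqref{conf_nat_hyp_dens} then yields $p(g^{-1}(z);\alpha_0,v)=p(z;\alpha_0,g(v))$, as desired.

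With this transformation rule in hand, transitivity is immediate. Given $\mu_v,\mu_w\in\mathcal{F}_{\alpha_0,a}$ with parameters $v,w\in{\mathbb B}^2$, set $g=g_w\circ g_v$, which lies in ${\mathbb G}$ since ${\mathbb G}$ is a group. Because $g_v(v)=0$ and $g_w(0)=w$, we have $g(v)=w$, so by the transformation rule $g_*\mu_v$ has density $p(\,\cdot\,;\alpha_0,w)$, i.e. $g_*\mu_v=\mu_w$.

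I do not expect a genuine obstacle here. The only point requiring some care is the bookkeeping in the middle step: correctly tracking the inverses of $g$ and of the involutions $g_v$, and invoking the invariance of $d\tau$ at exactly the right moment so that the density transforms by pure composition rather than picking up a derivative factor. Everything else is just the standard transitivity of $\mathrm{Aut}({\mathbb B}^2)$ on ${\mathbb B}^2$, together with the fact already recorded before the statement that $g_*$ maps $\mathcal{F}_{\alpha_0,v}$ into $\mathcal{F}_{\alpha_0,g(v)}$.
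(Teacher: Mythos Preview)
Your proposal is correct and follows the same approach as the paper: both take $g=g_w\circ g_v$ (using that $g_v$ is involutive) and invoke the transformation rule $g_*\mu_v=\mu_{g(v)}$. The only difference is that you carefully justify this rule via the ${\mathbb G}$-invariance of $d\tau$ and the rotation-freedom of disc automorphisms, whereas the paper simply records it as ``obvious'' in the sentence preceding the proposition and then declares the conclusion ``evident.''
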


\begin{proof}
Let $\mu_w$ and $\mu_v$ be two measures defined by the density (\ref{conf_nat_hyp_dens}) parametrized by points $w$ and $v$ in ${\mathbb B}^2$, respectively. Consider the M\"{o}bius transformation
$g = g_w \circ g_v^{-1} = g_w \circ g_v$, where $g_v$ and $g_w$ are defined by (\ref{Mobius}). Then, it is evident that $\mu_w = g_*\mu_v = (g_w \circ g_v)_* \mu_v$.
\end{proof} 

\begin{proposition}
Let $Z$ be the random variable with the density (\ref{conf_nat_hyp_dens}). The probability that $Z$ belongs to the disk $D_b = \{z: |z|<b\}$ for $0<b<1$ is given by the following formula
\begin{equation}
\label{prob_small}
 {\mathbb P}\{|Z|<b\} = (\alpha-1) (1- |a|^2) ^\alpha \int_{0}^{\sqrt{b}}  (1-s)^{\alpha-2} {_2}F_1 (\alpha,\alpha, 1,|a|^2 s) d s.
\end{equation} 
\end{proposition}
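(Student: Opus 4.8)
The plan is to compute the probability $\mathbb{P}\{|Z|<b\}$ directly from the density \eqref{conf_nat_hyp_dens} by integrating against the hyperbolic area element, passing to polar coordinates, performing the angular integration via a known integral, and then making a change of variable $s = r^2$ to match the claimed form. Concretely, write
\begin{equation*}
\mathbb{P}\{|Z|<b\} = \int_{D_b} p(z;\alpha,a)\, d\tau(z)
= \frac{\alpha-1}{\pi}(1-|a|^2)^\alpha \int_{D_b} \frac{(1-|z|^2)^{\alpha-2}}{|1-\overline{a}z|^{2\alpha}}\, dA(z),
\end{equation*}
where I have used $d\tau(z) = \tau(z)\,dA(z) = (1-|z|^2)^{-2}\,dA(z)$ to absorb two powers of $(1-|z|^2)$. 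Switching to polar coordinates $z = re^{i\theta}$ with $dA(z) = r\,dr\,d\theta$, the radial and angular parts separate except inside the factor $|1-\overline{a}z|^{-2\alpha}$, so the task reduces to evaluating the angular integral
\begin{equation*}
I(r) = \frac{1}{2\pi}\int_0^{2\pi} \frac{d\theta}{|1-\overline{a}re^{i\theta}|^{2\alpha}}.
\end{equation*}

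The key step is to show that $I(r) = {}_2F_1(\alpha,\alpha;1;|a|^2 r^2)$. Writing $|1-\overline{a}re^{i\theta}|^{-2\alpha} = (1-\overline{a}re^{i\theta})^{-\alpha}(1-a re^{-i\theta})^{-\alpha}$ and expanding each factor by the binomial series $\sum_{k\ge 0}\binom{-\alpha}{k}(\cdot)^k = \sum_k \frac{(\alpha)_k}{k!}(\cdot)^k$ (valid since $r<1$, $|a|<1$), the angular integral $\frac{1}{2\pi}\int_0^{2\pi} e^{i(j-k)\theta}\,d\theta$ kills all terms except $j=k$, leaving
\begin{equation*}
I(r) = \sum_{k=0}^\infty \left(\frac{(\alpha)_k}{k!}\right)^2 (|a|^2 r^2)^k = {}_2F_1(\alpha,\alpha;1;|a|^2 r^2),
\end{equation*}
using $(1)_k = k!$. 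Substituting back gives $\mathbb{P}\{|Z|<b\} = (\alpha-1)(1-|a|^2)^\alpha \int_0^b (1-r^2)^{\alpha-2}\,{}_2F_1(\alpha,\alpha;1;|a|^2 r^2)\, r\,dr$, and the substitution $s = r^2$, $ds = 2r\,dr$, turns the upper limit $b$ into $\sqrt{b}$ and produces exactly \eqref{prob_small}.

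The main obstacle, and the only place that needs care, is justifying the interchange of summation and integration in the expansion of $I(r)$ and the convergence of the resulting double series: one should note that for $r \le b < 1$ and fixed $a$ the two binomial series converge absolutely and uniformly in $\theta$, and that the combined double sum is dominated by $\sum_{j,k}\frac{(\alpha)_j(\alpha)_k}{j!\,k!}(|a|r)^{j+k} = {}_2F_1$-type expressions that converge for $|a|r<1$, so Fubini/Tonelli applies. A secondary point worth a sentence is that the final integrand $(1-r^2)^{\alpha-2}$ is integrable near $r=1$ only when $\alpha>1$, which is exactly the standing hypothesis on the parameter; since $b<1$ this is not even an issue for the stated range, but it reassures the reader that nothing degenerate occurs. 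No appeal to the transitivity propositions is needed here; the conformal invariance plays no role in this particular computation.
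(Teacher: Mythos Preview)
Your argument follows the same route as the paper's: pass to polar coordinates, evaluate the angular integral as ${}_2F_1(\alpha,\alpha;1;|a|^2 r^2)$, and substitute $s=r^2$; the only difference is that the paper cites this angular-integral identity from the literature (Liu, Lemma~2.1) while you supply its standard proof via binomial expansion and orthogonality of the exponentials. One slip worth flagging (shared with the paper's own computation): under $s=r^2$ with $r\in[0,b]$ the upper limit becomes $s=b^2$, not $\sqrt{b}$, so your sentence ``turns the upper limit $b$ into $\sqrt{b}$'' is not right as written and the formula \eqref{prob_small} inherits this typo.
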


\begin{proof}
In the proof we will refer to the following formula (see \cite[Lemma 2.1]{Liu}):
\begin{equation}
\label{Gauss}
\frac 1{2\pi}\int _0^{2\pi} \frac {dt}{|x-e^{it}|^{2\alpha}}
= {_2}F_1 (\alpha,\alpha,1, |x|^2), \mbox{ where } x \in {\mathbb B}^2,
\end{equation}
where the notion ${_2}F_1 (\cdot,\cdot,\cdot,\cdot)$ stands for the Gauss' hypergeometric series.

Passing to polar coordinates $a  = |a| e^{i\varphi}$, and $z=r e^{i\theta}$,   we obtain the following chain of equalities: 
\begin{equation*}\begin{split}
{\mathbb P}\{z\in D_b\} & = \int_{D_b} p_a (z) d\tau(z) = \frac{\alpha-1}{\pi} \int_{D_b} \left(   \frac {(1- |a|^2) (1- |z|^2)}{ |1-\overline{a}z| ^2}  \right)^\alpha d\tau(z)
\\&= \frac{\alpha-1}{\pi}  (1- |a|^2) ^\alpha \int_{D_b}   \frac { (1- |z|^2)^{\alpha-2}}{ |1-\overline{a}z| ^{2\alpha}} dA(z)
\\&= \frac{\alpha-1}{\pi}  (1- |a|^2) ^\alpha \int_{0}^b\int_{0}^{2\pi}   \frac { (1 - r^2)^{\alpha-2}} { |1- |a| r  e^{i(\theta -\varphi)}| ^{2\alpha}} rdrd\theta
\\&= \frac{\alpha-1}{\pi}  (1- |a|^2) ^\alpha \int_{0}^b\int_{0}^{2\pi}   \frac { (1 - r^2)^{\alpha-2}} { |1- |a| r  e^{i t}| ^{2\alpha}} rdrdt
\\&= \frac{\alpha-1}{\pi}  (1- |a|^2) ^\alpha \int_{0}^b r (1-r^2)^{\alpha-2}  \int_{0}^{2\pi}  \frac {dt}   { |e^{it}- |a| r  | ^{2\alpha} }
\\&= 2 (\alpha-1) (1- |a|^2) ^\alpha \int_{0}^b r (1-r^2)^{\alpha-2} {_2}F_1 (\alpha,\alpha, 1,|a|^2 r^2) dr
\\&= (\alpha-1)  (1- |a|^2) ^\alpha \int_{0}^b  (1-r^2)^{\alpha-2} {_2}F_1 (\alpha,\alpha, 1,|a|^2 r^2) d (r^2)
\\&= (\alpha-1) (1- |a|^2) ^\alpha \int_{0}^{\sqrt{b}}  (1-s)^{\alpha-2} {_2}F_1 (\alpha,\alpha, 1,|a|^2 s) d s.
\end{split}\end{equation*}
\end{proof}

The formula    \eqref{prob_small}    is quite complicated in general case, involving the integral over Gauss' hypergeometric series. Simple expressions can be obtained in some special cases,
most notably for $\alpha=2$, which will be exposed below. 

The integral  \eqref{prob_small}   can also be easily evaluated when $a=0$. This is substantiated in the following

\begin{corollary}
Let $Z$ be the random variable with the density
$$
p(z;\alpha,0) = \frac{\alpha-1}{\pi} (1-|z|^2)^\alpha.
$$ 
Then the probability that $Z$ belongs to the disc with radius $b$ equals
\begin{equation}
\label{prob_small_alpha}
{\mathbb P}\{ |Z| < b \} = 1 - (1-\sqrt{b})^{\alpha-1}.
\end{equation}
\end{corollary}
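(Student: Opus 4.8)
The plan is to specialize formula \eqref{prob_small} of the preceding Proposition to the degenerate value $a=0$. First I would observe that when $|a|=0$ two simplifications happen at once: the prefactor $(1-|a|^2)^\alpha$ equals $1$, and the Gauss hypergeometric factor ${_2}F_1(\alpha,\alpha,1,|a|^2 s)$ becomes ${_2}F_1(\alpha,\alpha,1,0)$, which equals $1$ because a hypergeometric series evaluated at argument $0$ reduces to its constant term. Hence \eqref{prob_small} collapses to the elementary one-variable integral
\begin{equation*}
{\mathbb P}\{|Z|<b\} = (\alpha-1)\int_0^{\sqrt b}(1-s)^{\alpha-2}\,ds .
\end{equation*}

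Then I would just evaluate this integral. Since $\alpha>1$, the number $\alpha-1$ is positive, so on the one hand the integrand $(1-s)^{\alpha-2}$ is integrable on $[0,\sqrt b]$ for every $0<b<1$ (no singularity is reached, as $\sqrt b<1$), and on the other hand an antiderivative is $-\tfrac{1}{\alpha-1}(1-s)^{\alpha-1}$. Substituting the limits gives
\begin{equation*}
(\alpha-1)\left[-\tfrac{1}{\alpha-1}(1-s)^{\alpha-1}\right]_{0}^{\sqrt b} = 1-(1-\sqrt b)^{\alpha-1},
\end{equation*}
which is exactly \eqref{prob_small_alpha}.

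The argument is entirely routine and there is no genuine obstacle; the only two points worth spelling out are the identity ${_2}F_1(\alpha,\alpha,1,0)=1$ and the use of the standing hypothesis $\alpha>1$, which is what makes the integral converge and the division by $\alpha-1$ legitimate. Equivalently, one may bypass \eqref{prob_small} entirely: for $a=0$ the density is radial, so ${\mathbb P}\{|Z|<b\}=\frac{\alpha-1}{\pi}\int_{D_b}(1-|z|^2)^{\alpha-2}\,dA(z)=2(\alpha-1)\int_0^b r(1-r^2)^{\alpha-2}\,dr$, which is precisely the integral appearing in the proof of the Proposition and, under the same substitution used there, yields the stated closed form.
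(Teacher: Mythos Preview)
Your proposal is correct and follows exactly the route the paper intends: the text simply introduces the Corollary by remarking that the integral \eqref{prob_small} ``can also be easily evaluated when $a=0$,'' and your argument carries out precisely that specialization, using ${_2}F_1(\alpha,\alpha,1,0)=1$ and integrating $(1-s)^{\alpha-2}$ explicitly. There is nothing to add---the paper gives no further details beyond this implied step.
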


From the formula (\ref{prob_small_alpha}) it is obvious that the distributions (\ref{conf_nat_hyp_dens}) are more concentrated for greater values of $\alpha$, with the Dirac delta distributions
arising in the limit $\alpha \to \infty$.
 
When dealing with probability measures on Riemannian manifolds, the meaningful analogue of mathematical expectation is the Riemannian center of mass (also referred to as Karcher mean). In our setup, this is a point in ${\mathbb B}^2$ defined as a unique minimizer of the following functional \cite{ABY}:
$$
J(z) = \int_{{\mathbb B}^2} d_{hyp}^2 (z,p(z;\alpha,a)) d \tau(z),
$$
where $p(z;\alpha,a)$ is defined by (\ref{conf_nat_hyp_dens}) and $d_{hyp}$ denotes the hyperbolic distance in the disc.
 
\begin{proposition} 
The Riemannian center of mass of densities (\ref{conf_nat_hyp_dens}) is $a$.
\end{proposition}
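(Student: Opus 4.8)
The plan is to bypass any explicit computation — note that the radial law \eqref{prob_small} is already hypergeometric — and instead exploit the conformal invariance of both the family \eqref{conf_nat_hyp_dens} and the hyperbolic metric. Write $\mu_a$ for the probability measure on ${\mathbb B}^2$ with density $p(\,\cdot\,;\alpha,a)$, and for a probability measure $\mu$ set $J_\mu(z)=\int_{{\mathbb B}^2}d_{hyp}^2(z,w)\,d\mu(w)$, so that the Riemannian center of mass of $\mu$ is the minimizer of $J_\mu$. First I would record that this minimizer is well defined and unique: $({\mathbb B}^2,d_{hyp})$ is a Cartan--Hadamard manifold, so $w\mapsto d_{hyp}^2(z,w)$ is strictly geodesically convex, hence so is $J_\mu$ whenever it is finite, and then it has a single minimizer (cf. \cite{ABY}); finiteness for $\mu=\mu_a$ holds at $z=0$ because near the boundary $d_{hyp}^2(0,w)\,p(w;\alpha,a)\,d\tau(w)$ is comparable to $(1-|w|^2)^{\alpha-2}\bigl(\log(1-|w|^2)\bigr)^2\,dA(w)$, which is integrable precisely because $\alpha>1$.

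Next I would establish equivariance under ${\mathbb G}$: every $g\in{\mathbb G}$ is an isometry of $d_{hyp}$, so for any $\mu$ and any $z$ one has $J_{g_*\mu}(g(z))=\int d_{hyp}^2(g(z),g(w))\,d\mu(w)=\int d_{hyp}^2(z,w)\,d\mu(w)=J_\mu(z)$; hence $z$ minimizes $J_\mu$ if and only if $g(z)$ minimizes $J_{g_*\mu}$, i.e. the center of mass of $g_*\mu$ is the $g$-image of the center of mass of $\mu$. Then I would treat the base point $a=0$: the density $p(z;\alpha,0)=\frac{\alpha-1}{\pi}(1-|z|^2)^\alpha$ depends on $z$ only through $|z|$ and $d\tau$ is rotation-invariant, so $\mu_0$ is invariant under every rotation $R_\theta\colon z\mapsto e^{i\theta}z$ about the origin, and each $R_\theta$ lies in ${\mathbb G}$. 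By the previous step $R_\theta$ fixes the center of mass of $\mu_0$ for all $\theta$, and the only common fixed point of all these rotations is $0$, so the center of mass of $\mu_0$ equals $0$.

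Finally I would transport this to arbitrary $a$. The involutive map $g_a$ of \eqref{Mobius} preserves $d\tau$ by the invariance formula for disc automorphisms stated above, and since $g_a\circ g_a=\mathrm{id}$ the pushforward $(g_a)_*\mu_0$ has density $p(g_a(z);\alpha,0)=\frac{\alpha-1}{\pi}(1-|g_a(z)|^2)^\alpha=p(z;\alpha,a)$; that is, $(g_a)_*\mu_0=\mu_a$. Combining the equivariance and base-point steps, the center of mass of $\mu_a$ is $g_a(0)=a$, as claimed. I expect the only genuinely delicate point to be the well-definedness claim — verifying finiteness of $J_{\mu_a}$ and that strict convexity on the Hadamard manifold ${\mathbb B}^2$ forces a unique minimizer — after which the proof is pure symmetry bookkeeping requiring no computation at all.
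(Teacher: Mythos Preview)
Your argument is correct and follows essentially the same route as the paper: show that the rotationally symmetric density at $a=0$ has center of mass $0$, then transport by the isometry $g_a$ to conclude for general $a$. Your version is more careful, supplying the finiteness check for $J_{\mu_a}$ and the uniqueness of the minimizer via the Cartan--Hadamard structure, points the paper leaves implicit.
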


In general, computation of the Riemannian center of mass may be demanding and requires the gradient descent algorithm \cite{ABY}. However, thanks to the group-invariance of the family ${\mathcal F}_{a,\alpha}$ it suffices to apply the simple geometric argument in order to verify the above Proposition. Indeed, for $a=0$ all densities (\ref{conf_nat_hyp_dens}) are rotationally symmetric. Hence, their mean must be zero. Furthermore, the densities with the same parameter $\alpha$ are related by isometries (\ref{Mobius}) which map zero to point $a$. Then, the mean (center of mass) is mapped by by the same isometry, i.e. it equals $g(0) = a$.

\begin{figure}[!tbp]
  \centering
  \begin{minipage}[b]{0.4\textwidth}
    \includegraphics[width=\textwidth]{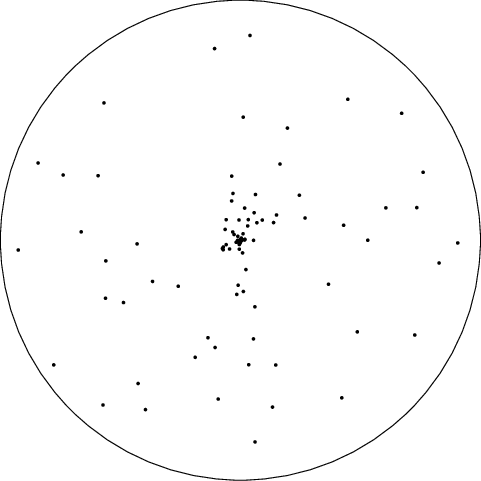}
  \end{minipage}
  \hfill
  \begin{minipage}[b]{0.4\textwidth}
    \includegraphics[width=\textwidth]{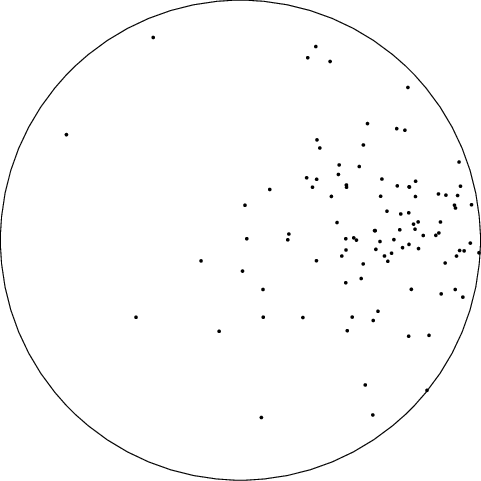}
\end{minipage}\caption{The plot of $100$ randomly generated points from $Z \sim {\mathcal F}_{a,\alpha}$ for $\alpha=2$ and $a=0$ (left), $a=1/2$ (right). }
\end{figure}

\begin{figure}[!tbp]
  \centering
  \begin{minipage}[b]{0.4\textwidth}
    \includegraphics[width=\textwidth]{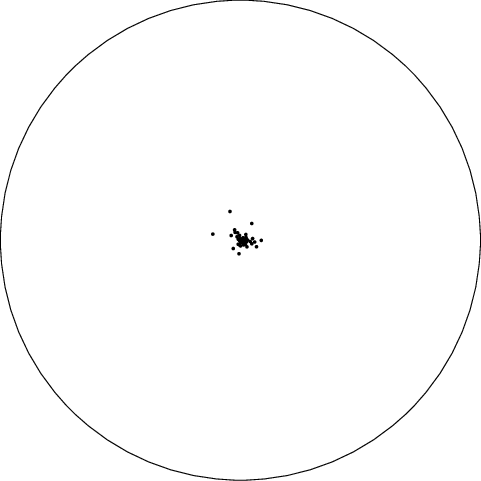}
  \end{minipage}
  \hfill
  \begin{minipage}[b]{0.4\textwidth}
    \includegraphics[width=\textwidth]{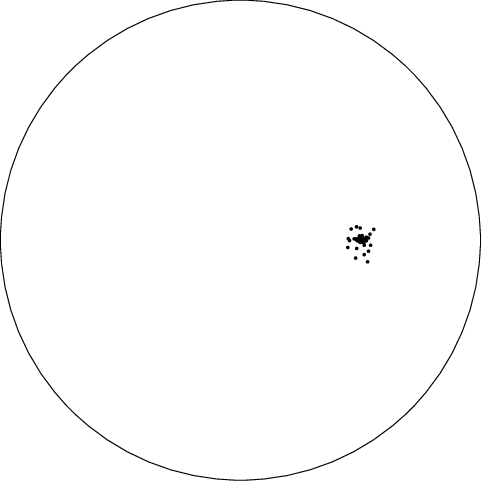}
  \end{minipage}\caption{The plot of $100$ randomly generated points from $Z \sim {\mathcal F}_{a,\alpha}$ for $\alpha=10$ and $a=0$ (left), $a=1/2$ (right).}
\end{figure}

\subsection{Random variate generation in the hyperbolic disc}

One could think of several approaches to the generation of random points from distributions ${\mathcal F}_{\alpha,a}$. The simplest method is based on Corollary 3.4. 

Recall that the M\" obius group acts transitively on sub-families ${\mathcal F}_{2,a}$ for fixed $\alpha=\alpha_0$. Therefore, it suffices to generate the random sample from the distribution with parameters $\alpha_0$ and $0$. Then, we can act on these random points by the M\" obius transformation in order to obtain the random sample from the distribution with parameter values $\alpha_0$ and $a$. 

In order to generate random points from ${\mathcal F}_{a,\alpha}$ with $a=0$ and arbitrary $\alpha$ notice that densities for $a=0$ are rotationally symmetric for all values $\alpha>1$. Therefore, the angles are uniformly distributed on $[0,2\pi]$, while the distribution of their radii is defined by the function (\ref{prob_small_alpha}).

This yields the following algorithm for generation of random variate from the distribution with parameters $\alpha$ and $a$.

1. Sample two random numbers $U_1$ and $U_2$ from the uniform distribution on $[0,1]$.

2. Set $\psi = 2 \pi U_1$.

3. Obtain the random number $\rho$ by inverting the distribution function (\ref{prob_small_alpha}), that is - by solving the equation
$$
1-(1-\sqrt{\rho})^{\alpha-1} = U_2  
$$
which yields
$$
\rho = \left( 1 - \sqrt[\alpha-1]{1-U_2}\right)^2.
$$
4. Set $z=\rho e^{i \psi}$. The random point $z \in {\mathbb B}^2$ is generated from the distribution ${\mathcal F}_{\alpha,0}$.  

5. Denote $\zeta = g_a(z)$, where $g_a$ is defined by (\ref{Mobius}). The random point $\zeta \in {\mathbb B}^2$ is generated from the distribution ${\mathcal F}_{\alpha,a}$.  

In figures 1 and 2 we depict the sets of random points from ${\mathcal F}_{\alpha,a}$ for $\alpha=2$ and $\alpha=10$, respectively, and three values of $a$. It is evident that distributions for $\alpha=10$ are more concentrated around their mean values, than those obtained for $\alpha=2$.

\subsection{The special case $\alpha = 2$}

As emphasized above, the formula (\ref{conf_nat_hyp_dens}) defines a class of families of probability measures, parametrized by $\alpha>1$. This class is invariant under the actions of the M\" obius group. However, the M\" obius group does not act transitively on the whole class, but only on sub-families obtained by fixing $\alpha$. Algorithms over these two-dimensional statistical sub-manifolds can be realized by learning M\" obius transformations (which boils down to learning generators of the M\" obius group). 

The most transparent sub-manifold is $\mathcal {F}_{2,a}$ which is parametrized by a single point $a \in {\mathbb B}^2$.

Gauss' hypergeometric series admit a simple expression for $\alpha =2$
\begin{equation*}
{_2}F_1 (2,2, 1,|a|^2 s)  = \frac{1+|a|^2s}{(1-|a|^2s)^3},
\end{equation*}
Therefore, setting $\alpha=2$ in (\ref{prob_small}) yields
\begin{equation}
\label{prob_small_2}
\begin{split}
P\{z\in D_b\} & =  (1- |a|^2) ^2 \int_{0}^{\sqrt{b}} \frac{1+|a|^2s}{(1-|a|^2s)^3} d s = \frac {(1- |a|^2) ^2  \sqrt{b}}{(1- |a|^2\sqrt{b})^2}.
\end{split}\end{equation}


In conclusion, the two-dimensional statistical manifold ${\mathcal F}_{2,a}$ provides the convenient statistical model for a wide range of problems of inference, black-box optimization and probabilistic modeling in hyperbolic disc. This particular family allows for simple and tractable formulae that can be used for sampling, parameter estimation, etc.

The family ${\mathcal F}_{2,a}$ is the orbit of the M\" obius group obtained by actions of this group on the probability measure defined by the density:
\begin{equation}
\label{ground}
p(z;2,0) = \frac{1}{\pi} (1-|z|^2)^2.
\end{equation}
Obviously, this family has very limited representative power. Again, this limitation can be addressed by considering mixtures of probability measures from ${\mathcal F}_{2,a}$. Notice that the Dirac delta distributions inside the disc can not be obtained as limiting case of densities belonging to the family $\mathcal {F}_{2,a}$. This means that those stochastic policies that are supposed to converge towards an optimal point should gradually increase the value of $\alpha$ in order to increase the concentration and thus ensure convergence towards the Dirac delta distribution.

\section{Relations with mathematical physics}

Wrapped Cauchy distributions were introduced to statistics in 1990's, \cite{McCullagh1,McCullagh2} but the expression (\ref{wrapped_Cauchy}) for their densities is familiar to mathematicians for longer than one century. If we consider (\ref{wrapped_Cauchy}) as analytic functions in the hyperbolic disc (depending on the variable $a$), they appear to be well known Poisson kernels. Poisson kernels are harmonic functions for the hyperbolic Laplace-Beltrami operator on the disc and play a central role in complex analysis and potential theory. They appear in many important formulae in mathematical physics.

In the same way, densities (\ref{conf_nat_hyp_dens}) appear as eigenfunctions of the hyperbolic Laplace-Beltrami operator in the unit disc corresponding to eigenvalues $\alpha(\alpha-1)$. This fact underpins their relevance in quantum optics \cite{DSV}.

In general quantum theory functions of the form (\ref{conf_nat_hyp_dens}) arise as $SU(1,1)$-coherent states (or hyperbolic coherent states) in analytic representations, see \cite{Perelomov,Wunsche,Vourdas}. 

Adopting the abstract framework of coherent states in quantum mechanics \cite{Perelomov}, the distribution with $a=0$ corresponds to the ground state. Indeed, densities (\ref{conf_nat_hyp_dens}) for $a=0$ are rotationally symmetric, that is - they are invariant under the maximal compact subgroup $SO(2) \sim {\mathbb S}^1$ of $SU(1,1)$. Recall that the hyperbolic disc is a homogeneous space, obtained by taking the quotient of $SU(1,1)$ over its subgroup $SO(2)$, i.e. $SU(1,1)/SO(2) = {\mathbb B}^2$. Hence, in accordance with the general group-theoretic framework of coherent quantum states \cite{Perelomov}, functions (\ref{conf_nat_hyp_dens}) are obtained by actions of the group $SU(1,1)$ on the ground state and, hence, is parametrized by points in ${\mathbb B}^2$. In our setup, the ground state corresponds to the density (\ref{ground}).

The correspondence between the family (\ref{conf_nat_hyp_dens}) and $SU(1,1)$-coherent states further implies an analogy between learning over this family and quantum evolution on invariant manifold of coherent states. Therefore, an algorithms over ${\mathcal F}_{a,\alpha}$ can be implemented by controlling the Hamiltonian of a quantum system. Since the Hamiltonian of the system is a linear combination of generators of $SU(1,1)$, its control is realized by updating the coefficients of this linear combination.

\section{Significance for ML and bioinformatics}

Recent trends in Geometric DL and data science indicate that the significance of hyperbolic data embeddings and optimization methods over hyperbolic spaces will rapidly grow both in theoretical and practical ML. This entails the necessity for tractable statistical models over hyperbolic manifolds. The family (\ref{conf_nat_hyp_dens}) provides a framework for many ML setups, such as parametrization of stochastic policies in reinforcement learning, or for the evolutionary optimization over hyperbolic (multi)discs.

There are many tasks where family (\ref{conf_nat_hyp_dens}) can be used for the design of efficient and robust algorithms. A wide range of learning algorithms dealing with uncertainties in hyperbolic data can be implemented using (mixtures or generalizations of) the family (\ref{conf_nat_hyp_dens}).

One important research direction in data science investigates and exploits embeddings of complex networks into hyperbolic manifolds. Networks are represented by so-called geometric graphs (sets of points), where the hyperbolic distance between two points is proportional to the probability that the corresponding two vertices are adjacent (i.e. that there exists an edge between them). It has been demonstrated that statistical power law (also known as Pareto-Zipf law) for the degrees of vertices implies hyperbolic geometries of complex networks and vice versa \cite{KPKVB}. On the other hand, it is well known that the vast majority of large networks arising in real life (such as internet, social networks, citations networks, biological networks) obey the general power law \cite{Newman}. This indicates that the family (\ref{conf_nat_hyp_dens}) might have many applications in statistical network science and in ML over hierarchical data of various kinds.

Since M\" obius transformations are isometries in ${\mathbb B}^2$, two sets of points which are related by a M\" obius transformation represent the same network. Therefore, one complex network corresponds to the whole sub-family ${\mathcal F}_{a,\alpha}$ for arbitrary $a \in {\mathbb B}^2$ and fixed $\alpha$. Mixtures of densities (\ref{conf_nat_hyp_dens}) can be used for probabilistic modeling of complex networks, and applied in many problems, such as comparison of the two networks, quantification of the difference/distance between them, or matching sub-networks (i.e. finding similar patterns within different networks). 

We are aware of only one study \cite{NYFK} which experimented with probability distributions on the hyperbolic disc for purposes of ML. The authors proposed the family which they named {\it wrapped normal distributions on hyperbolic disc}. They used this family for stochastic policies in various ML problems and architectures, including stochastic policies in RL (problem of playing Atari), designing hyperbolic variational auto-encoders (hyperbolic VAE's) and probabilistic word embeddings. However, we find the model proposed by the authors inconvenient and difficult to learn over. Another particular probability distribution (rather than the whole family) on hyperbolic disc is briefly mentioned in the paper \cite{KPKVB} on hyperbolic geometry of complex networks. This distribution referred to as {\it uniform on hyperbolic disc} by the authors is rotationally symmetric, while distribution of the radius is defined by the density function which is proportional to $\sinh$ (hyperbolic sine).

Experiments with densities (\ref{conf_nat_hyp_dens}) in supervised/unsupervised learning, as well as in RL with hyperbolic space of states (and, possibly, actions) are still to be conducted.

\section{Conclusion}

With recent geometric insights in data science, there is an apparent need for geometry informed ML architectures and models. Inferring symmetries and inherent curvatures of data appears as one of essential conceptual issues in this broad field. Recent developments entail a growing significance of classical mathematical fields of Riemannian geometry and Lie group theory for ML. This further implies the need for statistical models on Riemannian manifolds for encoding uncertainties in curved data.

Statistical models of data with positive curvature are provided by directional statistics and fairly well understood (although still not properly recognized and exploited by the ML research community).
  
On the other hand, data sets with inherently negative curvature seem to be even more ubiquitous in science, engineering and everyday life.  
The present study addresses the necessity for statistical models on hyperbolic manifolds. As the first step towards this goal, we propose the novel family of probability distributions on hyperbolic discs. This family is invariant with respect to actions of the group of hyperbolic isometries, which entails many favorable properties. Learning over group-invariant families boils down to learning orbits and infinitesimal generators of the corresponding group, thus facilitating implementation of stochastic policies and improving transparency of algorithms.

In practice, large hierarchical data sets are embedded into higher-dimensional manifolds such as hyperbolic balls or hyperbolic multidiscs. Therefore, extensions of the model proposed here to higher dimensions could potentially be very significant for Geometric DL.  

Notice that the wrapped Cauchy family has been extended to higher-dimensional spheres \cite{DS,KM}, thus providing a suitable statistical model (along with the von Mises-Fisher family) for spherical data. However, balls in even-dimensional (with the dimension greater than two) spaces can be equipped with two different kinds of hyperbolic metrics. Accordingly, there are two groups of hyperbolic isometries preserving the spheres \cite{Stoll}, thus giving rise to non-equivalent statistical models. We aim to continue the present study with investigations of group-invariant families on spheres and hyperbolic balls.

\end{document}